\documentclass{article}

\usepackage{microtype}
\usepackage{graphicx}
\usepackage{subcaption}
\usepackage{booktabs}
\usepackage{multirow}
\usepackage{textcomp}
\usepackage{xspace}
\usepackage{makecell}
\usepackage{bbding}
\usepackage{xcolor}

\usepackage{hyperref}

\usepackage[accepted]{icml2026}

\usepackage{amsmath}
\usepackage{amssymb}
\usepackage{mathtools}
\usepackage{amsthm}
\usepackage[table]{xcolor}
\usepackage{enumitem}
\usepackage{algorithm}
\usepackage{algorithmic}

\usepackage[capitalize,noabbrev]{cleveref}

\theoremstyle{plain}
\newtheorem{theorem}{Theorem}[section]
\newtheorem{proposition}[theorem]{Proposition}

\theoremstyle{definition}

\theoremstyle{remark}

\usepackage[textsize=tiny]{todonotes}

\usepackage[most]{tcolorbox}
\usepackage{xcolor}

\definecolor{promptbg}{RGB}{248, 248, 248}
\definecolor{promptframe}{RGB}{200, 200, 200}
\definecolor{prompttitle}{RGB}{50, 50, 50}

\newtcolorbox{promptbox}[1][]{
  enhanced,
  breakable,
  colback=promptbg,
  colframe=promptframe,
  coltitle=white,
  colbacktitle=prompttitle,
  fonttitle=\bfseries\sffamily,
  fontupper=\small,
  boxrule=0.5pt,
  arc=1mm,
  left=6pt, right=6pt, top=6pt, bottom=6pt,
  title={#1},
  #1
}

\icmltitlerunning{Adaptive Memory Agent}

\newcommand{\std}[1]{\textsubscript{\textpm#1}}

\newcommand{\ours}{\textsc{AdaMEM}\xspace}
\newcommand{\ourslow}{\textsc{AdaMEM-low}\xspace}
\newcommand{\ourshigh}{\textsc{AdaMEM-high}\xspace}
\newcommand{\oursft}{\textsc{AdaMEM-MFT}\xspace}
\newcommand{\trainingmethod}{\textsc{Step-MFT}\xspace}

\newcommand{\qwen}{
  \raisebox{-0.2ex}{\includegraphics[height=1.2em]{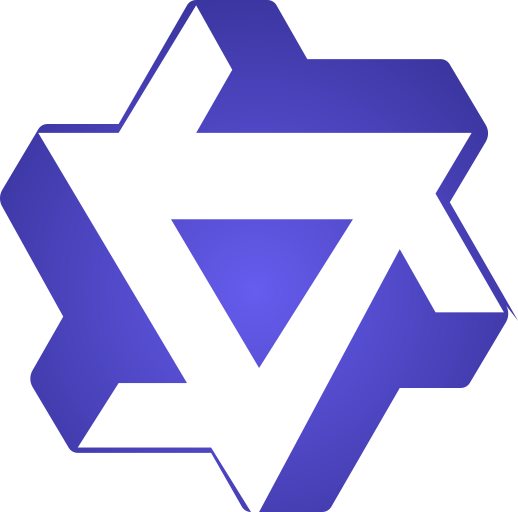}}
}

\newcommand{\google}{
  \raisebox{-0.2ex}{\includegraphics[height=1.2em]{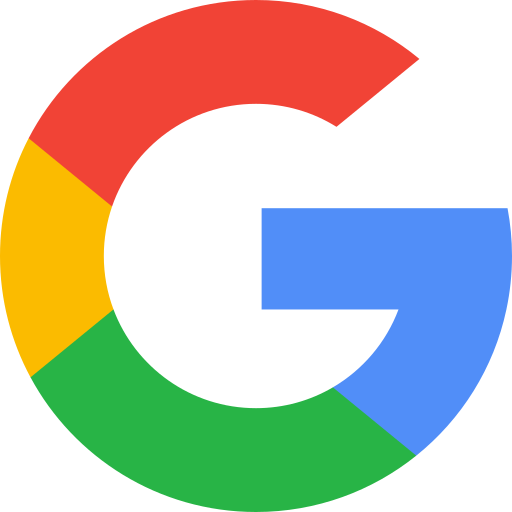}}
}

\definecolor{deltayellow}{RGB}{255, 255, 224}
\definecolor{lightgreen}{RGB}{224, 255, 224}
\definecolor{lightred}{RGB}{255, 224, 224}

\newcommand{\mydeltared}[1]{
  \colorbox{lightred}{
    \scriptsize
    \,$\uparrow$#1\,
  }
}
\newcommand{\mydeltagreen}[1]{
  \colorbox{lightgreen}{
    \scriptsize
    \,$\uparrow$#1\,
  }
}

\begin{document}

\twocolumn[

  \icmltitle{\ours: Test-Time Adaptive Memory for Language Agents}

  \icmlsetsymbol{equal}{*}

  \begin{icmlauthorlist}
    \icmlauthor{Yunxiang Zhang}{umich}
    \icmlauthor{Yiheng Li}{umich}
    \icmlauthor{Ali Payani}{cisco}
    \icmlauthor{Lu Wang}{umich}

  \end{icmlauthorlist}

  \icmlaffiliation{umich}{University of Michigan}
  \icmlaffiliation{cisco}{Cisco Research}

  \icmlcorrespondingauthor{Yunxiang Zhang}{yunxiang@umich.edu}

  \icmlkeywords{Machine Learning, ICML}

  \vskip 0.3in
]

\printAffiliationsAndNotice{}

\begin{abstract}
A central challenge for language agents is utilizing past experience to adapt to dynamic test-time conditions. While recent work demonstrates the promise of agentic memory mechanisms, most systems restrict retrieval to episode initiation. Consequently, agents are forced to rely on static guidance that becomes increasingly misaligned as long-horizon tasks unfold.
To address this rigidity, we propose the \textbf{Adaptive Memory Agent (\ours)}, a novel framework for agent test-time adaptation. Without updating model parameters online, \ours adapts agent behavior via a hybrid memory architecture: it maintains a \textit{long-term trajectory memory} of raw experiences collected offline while generating dynamic \textit{short-term strategy memory} on-the-fly to guide decision-making.
This mechanism enables the trade-off between token efficiency and adaptability across varying inference-time compute levels. Empirically, \ours significantly outperforms static memory baselines, achieving relative gains of up to 13\% on ALFWorld and 11\% on WebShop, with consistent leading performance extending to agentic search on HotpotQA.
To further enhance this adaptation, we develop \textbf{\trainingmethod}, a \textbf{Step}-wise \textbf{M}emory \textbf{F}ine-\textbf{T}uning technique that trains the policy to synthesize high-quality strategies from retrieved experiences, yielding additional performance gains.
Our work establishes a new scaling dimension for agentic memory, supporting continuous reasoning and self-evolution post-deployment in real-world environments. Our code is available at \url{https://github.com/yunx-z/AdaMEM}.
\end{abstract}
\begin{figure}[t]
  \begin{center}
    \centerline{\includegraphics[width=\columnwidth]{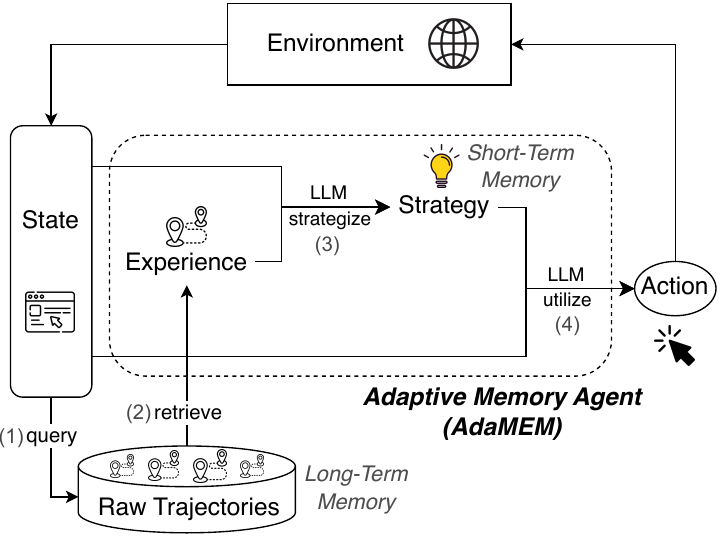}}
\caption{Overview of \ours. Instead of relying on a static, episode-level strategy, the agent adapts to the current decision step by querying a \textit{long-term trajectory memory} of raw experiences (1-2) and synthesizing them into a dynamic \textit{short-term strategy memory} tailored to the current \textit{state} (3). Conditioned on this \textit{test-time} strategy, the agent adapts its next action (4) without requiring parameter updates.}
    \label{fig:overview}
  \end{center}
\end{figure}

\section{Introduction}

Building agents that can learn from experience and adapt to new situations is a longstanding aspiration in artificial intelligence~\citep{10.5555/648203.749270,silver2025welcome}. With recent advances in large language models (LLMs), agents can now address long-horizon tasks including web navigation~\citep{DBLP:conf/iclr/QiLILSSYYY00D25,khalifa2026gaming}, coding~\citep{jimenez2024swebench,zou2025liveoibench} and scientific discovery~\citep{chen2024scienceagentbenchrigorousassessmentlanguage,jansen2024discoveryworld,zhang2025mlrcbench,zhang-etal-2026-learning}. Despite this progress, they still struggle with \textbf{test-time adaptation}~\citep{DBLP:journals/corr/abs-2411-03687}, the ability to effectively adjust strategies in dynamic environments during inference. While this is often addressed via test-time training~\citep{DBLP:conf/icml/SunWLMEH20,DBLP:journals/corr/abs-2310-13807} in machine learning and computer vision, updating model parameters for each instance or step is prohibitively expensive for long-horizon settings. Consequently, recent work has prioritized training-free prompt adaptation~\citep{DBLP:journals/corr/abs-2504-07952}, facilitated by agent memory mechanisms~\citep{hu2026memoryageaiagents}.
Existing approaches typically augment the system prompt with a retrieved trajectory~\citep{DBLP:conf/iclr/ZhengWW024} or a distilled strategy~\citep{DBLP:journals/corr/abs-2509-25140}. Yet, these systems generally restrict memory retrieval to occur only \textit{once} at episode initiation.\footnote{Following the taxonomy in \citet{hu2026memoryageaiagents}, we use ``agent memory'' in this work to refer specifically to \textit{experiential memory} (procedural knowledge abstracted from past trajectories for multi-step tasks), distinct from \textit{factual memory} (declarative knowledge for multi-session chat) or \textit{working memory} (active context management).}
This creates a bottleneck of \textit{rigidity}: by front-loading retrieval, agents are forced to rely on static guidance in the system prompt that cannot adapt to intermediate failures or shifting sub-goals.
This limitation motivates our primary research questions:
\begin{enumerate}
    \item \textbf{\textit{ How can agent memory overcome the rigidity of static retrieval to adapt continuously during inference?}}
    \item \textbf{\textit{ How can we efficiently train them to synthesize strategies that explicitly optimize this test-time adaptation?}}
\end{enumerate}

To address the first research question, we propose the \textbf{Adaptive Memory Agent (\ours)}, a framework that enables \textit{step-wise} test-time adaptation for language agents.
As shown in Figure~\ref{fig:overview}, \ours decouples memory storage from abstraction through a hybrid architecture. We maintain a scalable \textit{long-term trajectory memory} of raw experiences stored offline. At inference time, rather than retrieving raw logs, \ours dynamically generates a concise \textit{short-term strategy memory} tailored to inform the agent reasoning at specific steps.

This framework allows practitioners to optimize the trade-off between token efficiency and adaptability. We introduce flexible inference modes, ranging from \ourslow, which prioritizes efficiency by refreshing strategies only when necessary, to \ourshigh, which maximizes adaptability via step-wise regeneration.
Empirically, \ours achieves relative gains of up to 13\% on ALFWorld~\citep{DBLP:conf/iclr/ShridharYCBTH21} and 11\% on WebShop~\citep{DBLP:conf/nips/Yao0YN22} upon static memory baselines,  with consistent leading performance extending to agentic search on HotpotQA~\citep{yang2018hotpotqa}. By decoupling storage from abstraction, \ours allows the policy model to utilize \textit{off-policy} long-term memory constructed by another model while ensuring the short-term strategy remains strictly \textit{on-policy}, thus enabling \textit{cross-model generalization}.

Despite the effectiveness of this training-free approach, simple prompting often yields over-general strategies that lack the concrete diagnoses required to recover from specific failures mid-exploration~\citep{jiang2025metarlinducesexplorationlanguage}. To bridge this gap, we must explicitly train the model to generate superior strategies that actively drive decision-making.
However, standard outcome-based supervision is noisy, indistinguishably crediting both crucial insights and redundant text~\citep{DBLP:journals/corr/abs-2503-09516,DBLP:journals/corr/abs-2502-18449}. To achieve fine-grained credit assignment without the prohibitive cost of auxiliary rollouts or specialized critics~\citep{wang2024mathshepherdverifyreinforcellms,chae2025webshepherdadvancingprmsreinforcing}, we introduce  \textbf{Step}-wise \textbf{M}emory \textbf{F}ine-\textbf{T}uning (\textbf{\trainingmethod}). This critic-free method filters strategies for supervised fine-tuning based on a simple intuition: a valid strategy must alter the agent action to steer the trajectory toward success. This efficiently trains a more capable agent that outperforms both prompt-only and outcome-based counterparts without heavy computational overhead.

Our contributions are summarized as follows:
\begin{itemize}[nosep]
    \item We propose \textbf{\ours}, a hybrid memory framework that enables continuous test-time adaptation by dynamically synthesizing context-aware strategies instead of retrieving static artifacts.
    \item We introduce \textbf{\trainingmethod}, a rejection sampling fine-tuning technique that trains agents to generate high-utility strategies using \textit{process-level rewards}.
    \item We demonstrate that \ours establishes a new \textbf{scaling dimension for agentic memory}. Both our training-free and \trainingmethod variants achieve a superior Pareto frontier between inference cost and task performance, yielding relative gains of up to 17\% on ALFWorld and 13\% on WebShop compared to static memory baselines.
\end{itemize}

\begin{figure*}[t]
  \begin{center}
    \centerline{\includegraphics[width=\linewidth]{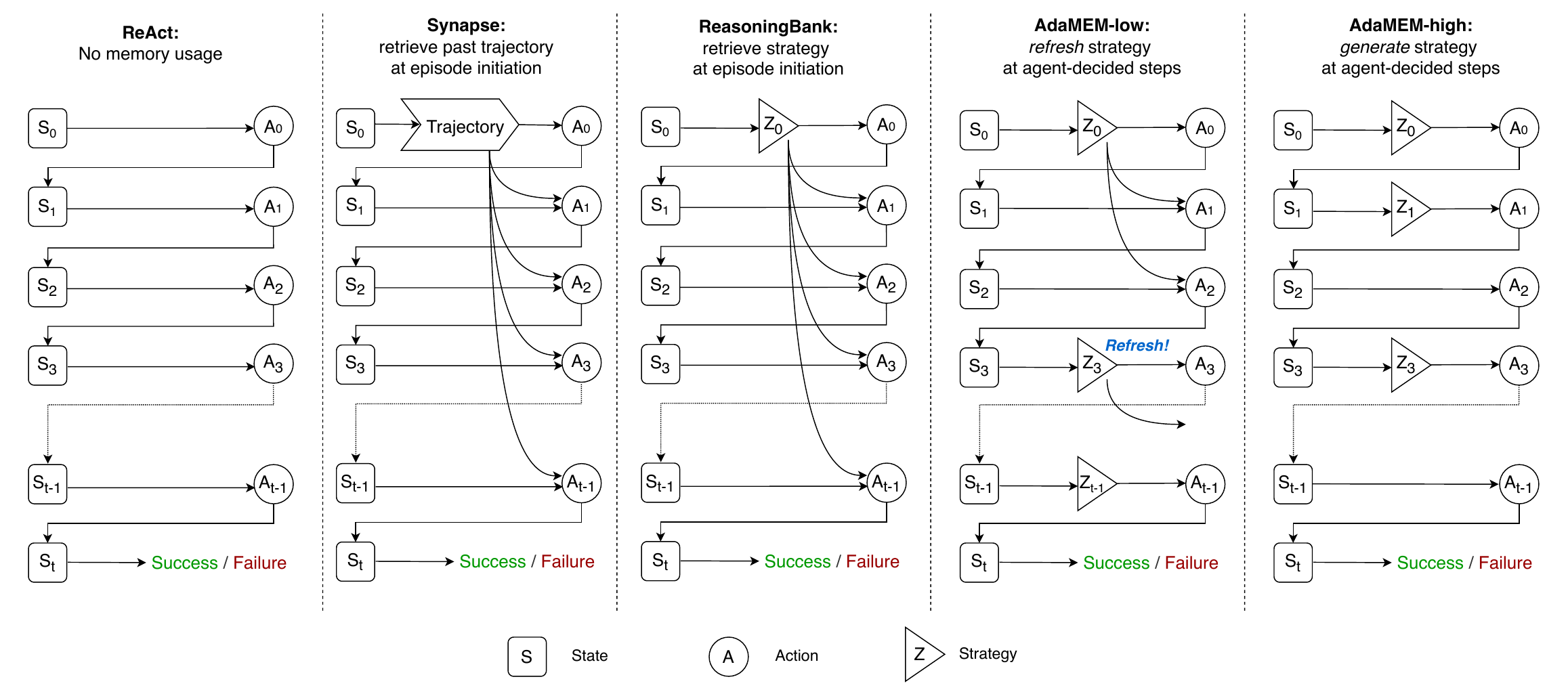}}
    \caption{Comparison of test-time agent memory mechanisms. \textbf{ReAct} operates without external memory. \textbf{Synapse} and \textbf{ReasoningBank} employ \textit{static initialization}, retrieving a trajectory or strategy ($Z_0$) only at the episode start ($S_0$), which forces the agent to rely on fixed guidance throughout the long-horizon task. In contrast, \textbf{\ours} enables \textit{test-time adaptation} via dynamic memory retrieval and synthesis. \textbf{\ourslow} balances efficiency by maintaining a persistent strategy, triggering a \textit{refresh} only when necessary (e.g., at $S_3$). \textbf{\ourshigh} maximizes adaptability by regenerating a fresh strategy ($Z_t$) at every critical decision step. This flexibility allows \ours to actively adjust its reasoning to evolving task states.}
    \label{fig:memory_usage}
  \end{center}
\end{figure*}

\section{Related Work}

Memory serves as the cornerstone of long-horizon reasoning by enabling agents to ground current decisions in past experiences~\citep{hu2026memoryageaiagents,jia2026aihippocampusfarhuman,202601.0618}, yet current systems predominantly rely on retrieving static artifacts to guide inference. These range from raw trajectories~\citep{DBLP:conf/iclr/ZhengWW024,zhao2024expelllmagentsexperiential,zhou2025mementofinetuningllmagents} and distilled insights~\citep{ye2025h2rhierarchicalhindsightreflection,huang-etal-2025-r2d2} to structured workflows~\citep{DBLP:conf/icml/WangMFN25,tang2025agentkbleveragingcrossdomain} and schematic reasoning patterns~\citep{NEURIPS2024_cde328b7,DBLP:journals/corr/abs-2509-25140,kim2025principlessyntheticstrategymemory}.
Skill-based memory mechanisms~\citep{wang2024voyager,zheng2025skillweaverwebagentsselfimprove,fang2026mempexploringagentprocedural,han2025legomemmodularproceduralmemory} extend this by actively expanding the agent's action space via executable code or procedural modules.
Yet they all share a fundamental reliance on \textit{pre}-computation where the memory is constructed before the current task instance begins. This enforces a rigid dependency on static priors matching evolving states. We overcome this by dynamically synthesizing fresh memory online, ensuring robustness to test-time distribution shifts.

Recent work on hybrid memory architectures have sought to automate memory maintenance via dynamic optimization.

MemRL~\citep{zhang2026memrlselfevolvingagentsruntime} and Agentic Memory~\citep{yu2026agenticmemorylearningunified} leverage reinforcement learning to estimate memory utility or learn management policies at runtime.
Domain-specific architectures~\citep{suzgun2025dynamiccheatsheettesttimelearning,tang2025chemagentselfupdatinglibrarylarge,yan2023larplanguageagentroleplay} integrate heterogeneous memory forms including evolving cheatsheets, self-updating scientific libraries, and role-playing histories to achieve adaptability in specialized environments.
However, these systems primarily optimize for \textit{inter}-episode transfer which consolidates past success and failures to aid future attempts. This lacks the \textit{intra}-episode mechanisms to recover if retrieved priors fail mid-exploration. \ours bridges this gap by replacing static retrieval with dynamic, context-aware strategy synthesis.

\section{Methodology}

We propose \textbf{\ours}, a framework that enables \textit{test-time adaptation} for language agents.
As illustrated in Figure~\ref{fig:memory_usage}, previous mechanisms like Synapse~\citep{DBLP:conf/iclr/ZhengWW024} and ReasoningBank~\citep{DBLP:journals/corr/abs-2509-25140} employ \textit{static initialization}, where the agent retrieves a trajectory or strategy only at the episode start ($S_0$). This restricts the agent to fixed guidance throughout the task, often leading to failure when the initial plan becomes obsolete.
In contrast, \ours adapts the agent behaviors by dynamically updating its context with an up-to-date strategy as the short-term memory.

In this section, we first describe the inference-time mechanism for generating adaptive strategy memory (\S\ref{sec:adaptation}), followed by our \trainingmethod technique that trains the agent to synthesize this guidance effectively (\S\ref{sec:training}).

\subsection{\ours: Test-Time Adaptive Memory}
\label{sec:adaptation}

The core of \ours is a non-parametric adaptation mechanism. As illustrated in Figure~\ref{fig:overview}, the agent maintains a static pool of raw experiences ($\mathcal{M}$) but generates dynamic, state-specific strategies ($z_t$) that steer the agent decision-making without modifying parameters.

\noindent\textbf{Long-Term Trajectory Memory ($\mathcal{M}$).}
We first construct a scalable long-term memory offline, denoted as $\mathcal{M}$.
We define a trajectory as a sequence $\tau = \{(s_1, a_1, r_1), \dots, (s_T, a_T, r_T)\}$, where $s_t$, $a_t$, and $r_t$ denote the state, action, and reward at step $t$, respectively.
In practice, we include the full interaction history in the agent prompt to represent the state $s_t$, ensuring the model has access to all prior context for reliable decision-making.
Given the sparse reward structures of common LLM agent applications where success is indicated only at the end of an episode, we populate $\mathcal{M}$ exclusively using trajectories that achieve a final success ($r_T = 1$).
To enable dense retrieval, we index $\mathcal{M}$ by state embeddings. For every step $t$ in a successful trajectory, we store a key-value pair:
$
    k_i = \phi(s_t), \quad v_i = (s_t, a_t, \tau_{t+1:T})
$
where $\phi(\cdot)$ is a pre-trained embedding model. The value $v_i$ contains the current state $s_t$, the immediate action $a_t$, and the remaining trajectory $\tau_{t+1:T}$ that leads to final success.
This structure constitutes the basis for adaptation, furnishing the agent with verified demonstrations of successful decision-making in similar historical states.

\noindent\textbf{Short-Term Strategy Memory ($z_t$).}
At test time, the agent does not use $\mathcal{M}$ directly. Instead, it synthesizes a \textit{Short-Term Strategy Memory}, denoted as $z_t$.
This is a concise natural language strategy generated on-the-fly that serves as concrete guidance for the selection of the next action.
Crucially, this differs from prior work like ReasoningBank \citep{DBLP:journals/corr/abs-2509-25140}, which relies on strategies that are pre-generated offline and based solely on past experiences.
In contrast, our $z_t$ is generated \textit{online} and is explicitly conditioned on the current state $s_t$, ensuring that the guidance is tailored to the specific dynamics of the test-time environment.

This synthesis process creates a natural control point for optimization by managing the lifetime of the strategy in the agent short-term memory. We introduce two inference modes of varying \textit{adaptation effort} (Figure~\ref{fig:memory_usage}), allowing practitioners to balance the precision of fresh guidance against the token cost of regeneration. The full inference procedures for both modes are illustrated in Algorithm~\ref{alg:inference} of Appendix~\ref{app:hyperparameters}.

\noindent\textbf{I. \ourshigh (High Adaptation Effort).}
This variant prioritizes adaptability by ensuring the strategy is always strictly conditioned on the immediate state.
At step $t$, the agent outputs a tentative action $a'_{t}$ and a binary decision of memory retrieval $d_{\text{mem}} \in \{\text{yes}, \text{no}\}$ based on state $s_t$.
If $d_{\text{mem}} = \text{no}$, the agent proceeds with the tentative action $a'_{t}$ without strategy conditioning.
If $d_{\text{mem}} = \text{yes}$, the agent triggers memory retrieval to form the context $\mathcal{E}_{\text{ret}}$, synthesizes a fresh strategy $z_t \sim \pi_\theta(z | s_t, \mathcal{E}_{\text{ret}})$, and generates a refined action $a_t \sim \pi_\theta(a | s_t, z_t)$.
Crucially, in this mode, $z_t$ is a transient strategy used solely for the current step and is immediately discarded from the context after usage.
Consequently, if the agent requires guidance at the next step, it must perform the computationally expensive strategy synthesis process.
While this incurs higher token costs, it maximizes robustness by preventing the agent from relying on potentially stale guidance.

\noindent\textbf{II. \ourslow (Low Adaptation Effort).}
To improve token efficiency, \ourslow treats $z_t$ as a persistent state variable that guides multiple steps.
Unlike \ourshigh, which must regenerate guidance from scratch, \ourslow maintains a currently active strategy $z_{\text{curr}}$ in its context.
At step $t$, the agent predicts both a tentative action and a refresh decision $d_{\text{refresh}}$:
$
    (a_t', d_{\text{refresh}}) \leftarrow \pi_\theta(s_t, z_{\text{curr}}).
$
If $d_{\text{refresh}} = \text{no}$, the agent simply executes the tentative action $a_t'$, continuing to rely on the existing $z_{\text{curr}}$ if needed.
Conversely, if $d_{\text{refresh}} = \text{yes}$, the agent retrieves relevant experiences $\mathcal{E}_{\text{ret}}$ and synthesizes a fresh strategy $z_{\text{new}}$. This strategy guides the immediate action $a_t \sim \pi_\theta(a | s_t, z_\text{new})$ and updates the persistent memory ($z_{\text{curr}} \leftarrow z_{\text{new}}$).
By persisting the strategy, the agent can benefit from high-level guidance across multiple steps, effectively reducing the adaptation cost until a significant distribution shift requires an update.

\subsection{\trainingmethod: Learning to Curate Strategy Memory}
\label{sec:training}

Although \ours can operate via zero-shot prompting, untrained models often produce suboptimal strategies that fail to capture the nuances of dynamic environments. Therefore, we aim to train the policy to curate high-quality strategies that genuinely improve decision-making.
This requires a training signal that distinguishes effective guidance from generic text. Standard outcome-based filtering~\citep{uesato2022solvingmathwordproblems} is not reliable because it assumes all step-wise strategies on a successful trajectory are correct. This naive attribution credits irrelevant or misleading strategies that do not actively contribute to the success.

Formally, we seek to maximize the \textit{Strategy Advantage}, denoted as $A(s, z)$, which quantifies the utility of a generated strategy $z$ given state $s$. Let $V^\pi(s)$ denote the value function (expected probability of success) starting from state $s$ under policy $\pi$.
Treating the strategy $z$ as a high-level action, we define its advantage as the expected improvement in success rate over the baseline:
\begin{equation}\label{eq:advantage}
    A(s, z) = \underbrace{V^{\pi_{\text{mem}}}(s)}_{\text{Value with Strategy } z} - \underbrace{V^{\pi_{\text{base}}}(s)}_{\text{Baseline Value}}
\end{equation}
where $\pi_{\text{mem}}(\cdot) = \pi_\theta(\cdot|s, z)$ and $\pi_{\text{base}}(\cdot) = \pi_\theta(\cdot|s)$. Maximizing $A(s, z)$ isolates strategies that strictly increase the likelihood of success compared to the memory-free baseline.

However, directly estimating $V$ is computationally prohibitive in multi-turn scenarios, as it requires Monte Carlo rollouts~\citep{wang2024mathshepherdverifyreinforcellms,DBLP:journals/corr/abs-2505-10978,lee2025printsrewardmodelinglonghorizon} or a specialized process reward model~\citep{choudhury2025processrewardmodelsllm,xi2025agentprmprocessrewardmodels,chae2025webshepherdadvancingprmsreinforcing}. To address this, we introduce \textbf{\trainingmethod}, which reinforces the generation of high-utility strategies that act as \textit{necessary} conditions for correct decision-making. Concretely, \trainingmethod uses a computation-free proxy for $A(s, z)$ to achieve fine-grained credit assignment for strategy quality without auxiliary value models or environmental rollouts.

\begin{figure}[t]
  \vskip 0.2in
  \begin{center}
    \centerline{\includegraphics[width=\columnwidth]{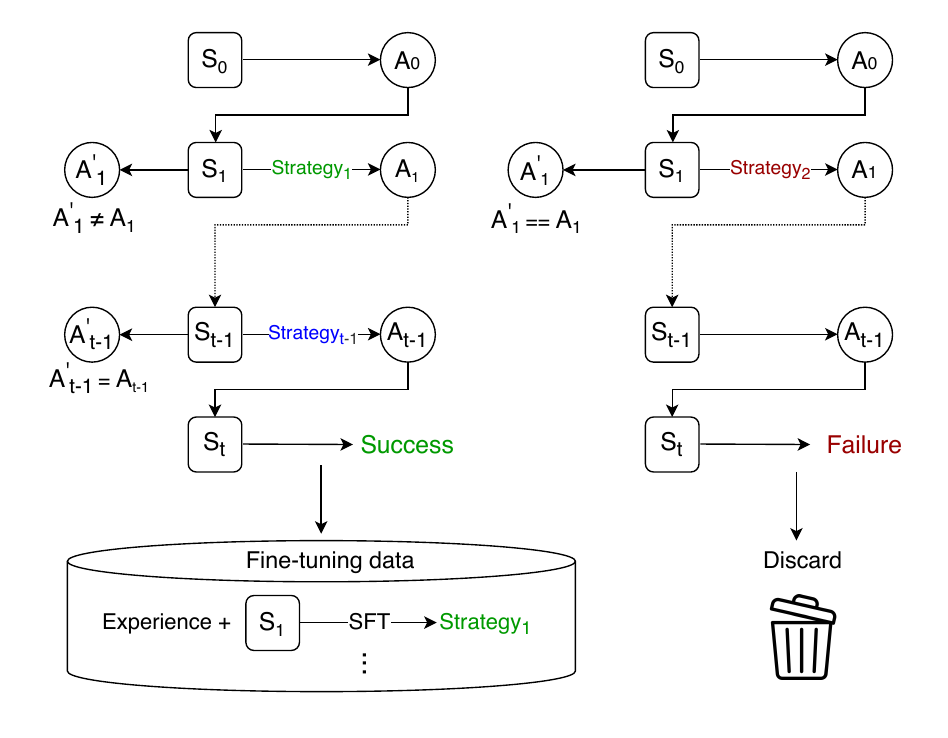}}
    \caption{
Overview of the \textbf{\trainingmethod} framework. We employ dual-filter rejection sampling to curate high-utility strategies for supervised fine-tuning (SFT). The process retains only \textit{successful} trajectories where the strategy actually changes the proposed action ($A_t \neq A'_t$, \textcolor{green!50!black}{green}), while discarding redundant instances where the memory-free baseline yields the same action ($A_t = A'_t$, \textcolor{blue}{blue}).
}
    \label{fig:step_rft}
  \end{center}
\end{figure}

\subsubsection{Data Collection}
We perform inference on the training set using the \ourshigh mechanism to collect a dataset of decision tuples $\mathcal{D}$. We select \ourshigh over \ourslow to maximize sample density: while \ourslow caches persistent strategies to save tokens, yielding only sparse updates, \ourshigh evaluates transient strategies at every step. This allows us to harvest a rich, dense set of diverse $(s_t, z_t)$ training pairs from each trajectory. The collected dataset consists of tuples $\mathcal{D} = \{(s_t, \mathcal{E}_{\text{ret}}, z_t, a_t, a'_t, r)\}$, where $s_t$ is the current state, $\mathcal{E}_{\text{ret}}$ is the retrieved context from Long-Term Memory $\mathcal{M}$, and $z_t$ is the generated strategy. We also record $a'_t$, the baseline action sampled from the policy $\pi_\theta(\cdot | s_t)$ without memory conditioning, and $a_t$, the action sampled from $\pi_\theta(\cdot | s_t, z_t)$ with memory conditioning. Finally, $r$ denotes the final trajectory reward. Crucially, $a'_t$ is a hypothetical action sampled for comparison purposes only; it is not executed in the environment.

\subsubsection{Process-Level Filtering}

Our objective is to curate strategies that yield a strictly positive advantage over the baseline policy. In many steps of a successful trajectory, the baseline agent ($a'_t$) selects the correct action even without memory. Training on these redundant samples encourages reasoning that does not influence actions. To ensure the strategy actively drives decision-making, we filter for instances that alter the actions.

Specifically, we apply a dual-filter: (1) \textbf{Outcome Success:} the trajectory must be successful ($r=1$) and (2) \textbf{Action Change}: the strategy must alter the agent decision on the next action ($a_t \neq a'_t$, see Figure~\ref{fig:step_rft}). In practice, (2) is simply based on an exact lexical match between two action strings, which induces negligible overhead. By design, this filter prioritizes \textbf{precision over recall}: it may discard instances where the baseline action would also have succeeded, but ensures the curated set contains only strategies that actively drove successful decision-making.
We formally justify this condition as a prerequisite for positive advantage below.

\begin{proposition}[Necessity of Action Change]
\label{prop:necessity}
Let $Q(s, a)$ be the action-value function representing the probability of task success. Assuming deterministic greedy decoding and that actions from successful trajectories are sufficiently effective, operationalized via sparse outcome rewards as a proxy for step-level correctness labels, for a strategy $z$ to have a strictly positive Advantage ($A(s, z) > 0$), it is necessary that the memory-conditioned action differs from the baseline action ($a_t \neq a'_t$).
\end{proposition}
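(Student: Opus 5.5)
We argue by contraposition: assuming $a_t = a'_t$, we show $A(s_t, z) \le 0$, and in fact $A(s_t,z)=0$ under the stated modeling assumptions. The plan is to reduce both value functions in Eq.~\eqref{eq:advantage} to action values at the current state. Under deterministic greedy decoding, the memory-conditioned policy $\pi_{\text{mem}}(\cdot)=\pi_\theta(\cdot\mid s_t,z)$ puts all its mass on a single action $a_t$. Likewise, the baseline $\pi_{\text{base}}(\cdot)=\pi_\theta(\cdot\mid s_t)$ puts all its mass on $a'_t$. Hence at state $s_t$ we may write
\begin{equation*}
V^{\pi_{\text{mem}}}(s_t)=Q(s_t,a_t),\qquad V^{\pi_{\text{base}}}(s_t)=Q(s_t,a'_t).
\end{equation*}
This holds provided $Q$ is interpreted as the success probability of taking the given action at $s_t$ and then following a common continuation policy.

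The first step is to fix that continuation convention. In \ourshigh the strategy $z_t$ is transient and discarded after step $t$. So after executing the step-$t$ action, the agent proceeds with the same policy $\pi_\theta$ in both branches, re-deciding at each later state whether to retrieve memory. The two branches therefore differ only in the action taken at $s_t$, and both reduce to the same $Q(s_t,\cdot)$ with a shared downstream policy. I will state this explicitly as the operative definition of $Q$, since Eq.~\eqref{eq:advantage} is otherwise ambiguous about how $z$ influences later steps.

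The second step is the substitution itself. If $a_t = a'_t$, then
\begin{equation*}
A(s_t,z)=Q(s_t,a_t)-Q(s_t,a'_t)=0,
\end{equation*}
so $A(s_t,z)>0$ fails. Contrapositively, $A(s_t,z)>0$ implies $a_t\neq a'_t$. The assumption that actions from successful trajectories are sufficiently effective (outcome reward as a proxy for step correctness) is not needed for this necessity direction. I will note that it serves the sufficiency-flavoured interpretation of the filter, namely that retained changed actions are plausibly improvements and hence have $A>0$.

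The main obstacle I expect is the gap between lexical and semantic equality of actions, together with how $z_t$'s influence beyond step $t$ is handled. The argument identifies actions with the exact strings compared by the filter, so $a_t=a'_t$ as strings gives equal environment transitions and equal $Q$. I will state that identification as part of the deterministic setup. For \ourslow, where $z$ persists and could affect later steps, the claim does not follow as stated. I will therefore restrict the proposition to the \ourshigh data-collection regime used in the data collection step above, where $z_t$ does not persist, which is the setting in which the filter is applied.
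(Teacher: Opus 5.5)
Your proposal is correct and follows the paper's argument: deterministic greedy decoding reduces both values to $Q(s,a_t)$ and $Q(s,a'_t)$, and $a_t=a'_t$ then forces $A(s,z)=0$. Two of your additions refine that argument rather than change it. First, you make explicit that both branches share a continuation policy, and you restrict the claim to the transient-strategy \ourshigh regime, a point the paper's single $Q(s,a)$ leaves implicit. Second, you correctly observe that the ``sufficiently effective'' assumption is not needed for necessity: the paper uses it only to obtain $A(s,z)\ge 0$.
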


\begin{proof}
Under deterministic greedy decoding, the value of a state is entirely determined by the immediate action chosen by the policy: $V^\pi(s) = Q(s, \pi(s))$. Therefore, we can rewrite the Strategy Advantage from Equation~\ref{eq:advantage} as:
\begin{equation}
    A(s, z) = Q(s, a_t) - Q(s, a'_t)
\end{equation}
Since our training data consists exclusively of \textit{successful trajectories}, the action $a_t$ is empirically verified to lead to success.

Rather than assuming strict optimality, we treat $a_t$ as sufficiently effective by using sparse outcome rewards as a proxy for process rewards, such that $Q(s, a_t) \ge Q(s, a'_t)$.
Consequently, the advantage is non-negative ($A(s, z) \ge 0$) by default.

To achieve a \textit{strictly positive} advantage ($A(s, z) > 0$), we require $Q(s, a_t) > Q(s, a'_t)$. We consider the counterfactual where the actions are identical ($a_t = a'_t$). In this case, the difference in action-value is strictly zero:
\begin{equation}
    a_t = a'_t \implies Q(s, a_t) - Q(s, a'_t) = 0 \implies A(s, z) = 0
\end{equation}
Thus, a strictly positive advantage is impossible if the actions are unchanged. Therefore, $a_t \neq a'_t$ is a necessary condition for $A(s, z) > 0$.
\end{proof}

\subsubsection{Training Objective}
The filtered dataset $\mathcal{D}^*$ consists of pairs $((s, \mathcal{E}_{\text{ret}}), z^*)$ where $z^*$ represents a ``silver'' strategy that passes the process-level filtering. We instantiate this learning process via rejection sampling fine-tuning~\citep[RFT;][]{DBLP:journals/corr/abs-2308-01825}, optimizing the policy using standard cross-entropy loss used in supervised fine-tuning (SFT) to maximize the likelihood of these effective strategies:
\begin{equation}
    \mathcal{L}_{\text{SFT}} = - \mathbb{E}_{(s, \mathcal{E}, z^*) \sim \mathcal{D}^*} [\log \pi_\theta(z^* | s, \mathcal{E})]
\end{equation}
While our process-level reward could in principle be optimized using online reinforcement learning (RL) algorithms such as GRPO~\citep{shao2024deepseekmathpushinglimitsmathematical}, we opt for RFT to efficiently inject the process-level signals without the computational overhead and instability associated with online training. We leave the investigation of full memory-augmented RL~\citep{jiang2025metarlinducesexplorationlanguage,anonymous2026exploratory} to future work.

To minimize deployment complexity, we employ this single fine-tuned model for both strategy and action generation (Stages 3 and 4 in Figure~\ref{fig:overview}), eliminating the need to host separate specialized models during inference. We refer to this unified, fine-tuned agent as \oursft.

\section{Experiments}

\subsection{Experimental Setup}
\noindent\textbf{Agent Benchmarks.}
We evaluate \ours on three benchmarks spanning embodied navigation, web shopping, and agentic search.
\textbf{ALFWorld}~\citep{DBLP:conf/iclr/ShridharYCBTH21} features embodied household tasks across six categories, such as \textit{Pick \& Place} and \textit{Heat \& Place}. Agents must navigate and interact with objects to fulfill high-level goals. We assess generalization using \textit{seen} (140 instances) and \textit{unseen} (134 instances) splits, the latter containing entirely novel room layouts. Performance is measured by the \textbf{Success Rate} (\%).
\textbf{WebShop}~\citep{DBLP:conf/nips/Yao0YN22} simulates an e-commerce platform with 1.1 million products. The agent navigates HTML interfaces to ground natural language instructions into purchase actions. We use the dense metric \textbf{Task Score} (0 to 100) to measure how closely the selected product's attributes and price match the user's requirements.
\textbf{HotpotQA}~\citep{yang2018hotpotqa} is a multi-hop question answering benchmark that requires the agent to reason over multiple documents to answer complex factual questions. To extend the task horizon of agentic search, we adopt the \textit{cross-episode} evaluation setup~\citep{mrSearch2025,lamer2025}, where an agent iteratively refines its answer across multiple episodes. At the end of each episode, a self-reflection step is always triggered, prompting the agent to reconsider its current answer and search for additional evidence in the next episode. We evaluate on 500 test questions with a per-question budget of at most 15 reasoning steps or 3 episodes, whichever is reached first. Performance is measured by \textbf{Success Rate} (\%), which is the percentage of tasks where the agent answer exactly matches the gold answer.
All benchmarks are evaluated in a text-only setting.

\noindent\textbf{Backbone Models.} We adopt \textbf{Qwen3-4B-Instruct-2507}~\citep{DBLP:journals/corr/abs-2505-09388} for ALFWorld and HotpotQA, and an RL-trained \textbf{Qwen2.5-7B-Instruct}~\citep{DBLP:journals/corr/abs-2412-15115} for WebShop.\footnote{We use the checkpoint \textit{langfeng01/GiGPO-Qwen2.5-7B-Instruct-WebShop} released by~\citet{DBLP:journals/corr/abs-2505-10978}.} We select the RL-trained 7B model for WebShop due to the insufficient success rate of 4B base model on this task, which hinders the collection of successful trajectories required to populate the long-term memory. Our primary evaluation employs an \textbf{on-policy} setup where the same backbone constructs memory, synthesizes strategies, and generates actions. This strictly tests for \textit{self-improvement} rather than distillation. To assess \textit{cross-model generalizability}, we introduce an \textbf{off-policy} setup using \textbf{Gemma-27b-it}. Representing a distinct model family and size, Gemma synthesizes strategies and generates actions using frozen memory banks created by the Qwen models, thereby testing the transferability of long-term memory across different models. Qwen3-Embedding-4B~\citep{DBLP:journals/corr/abs-2506-05176} provides embedding for all retrieval operations.

\noindent\textbf{Memory Baselines.}
We compare \ours against several representative agent  memory baselines: (1) \textbf{No Memory}, the base LLM agent operating without any memory module; (2) \textbf{Synapse}~\citep{DBLP:conf/iclr/ZhengWW024}, a representative method that retrieves full past trajectories to serve as in-context exemplars; and (3) \textbf{ReasoningBank}~\citep{DBLP:journals/corr/abs-2509-25140}, which distills raw trajectories into high-level strategies \textit{offline}.\footnote{We exclude workflow-based memories like AWM~\citep{DBLP:conf/icml/WangMFN25},  because the selected benchmarks operate within fixed workflows and the primary challenge is \textit{adapting} to dynamic situations.} ReasoningBank can be viewed as a static special case of \ours, distinguished by two key rigidities: strategies are \textit{pre-computed offline} rather than synthesized online conditioned on the current state, and retrieval is restricted to \textit{episode initiation} rather than adapting dynamically to intermediate steps.

We evaluate in the \textit{offline} setting~\citep{DBLP:conf/icml/WangMFN25}, where the long-term memory is pre-populated with successful trajectories or strategies on training data.

\noindent\textbf{Implementation Details.}
We implement all agents using the \textbf{ReAct}~\citep{DBLP:conf/iclr/YaoZYDSN023} framework, as its interleaved reasoning and action traces significantly outperform zero-shot direct action generation in long-horizon tasks.
For the Long-Term Trajectory Memory, we populate the index with valid trajectories collected from the training set, resulting in 1,596 correct trajectories for ALFWorld,  7,150 for WebShop and 965 for HotpotQA.
To strictly control for information access, all compared memories are constructed based on this identical pool of raw successful trajectories.
Unless otherwise specified, all tables and figures use top-1 ($k=1$) retrieval by default. We study the effect of varying $k$ in Figure~\ref{fig:scalability}.
To train the \trainingmethod agent, we curate a dataset of 13,067 filtered strategies for ALFWorld, and 12,492 for WebShop. Detailed training and inference configurations are provided in Appendix~\ref{app:hyperparameters}. We include prompts for our \ours and baseline agents in Appendix~\ref{app:prompt}.

\begin{table}[t]
\centering
\small
\setlength{\tabcolsep}{4pt}
\caption{Performance comparison under \textit{training-free} setups. We report the Success Rate (\%) for ALFWorld across seen and unseen splits and the average Task Score for WebShop. Results are reported as mean $\pm$ standard deviation over 3 independent runs. \textbf{Bold} indicates the highest score, and \underline{underline} denotes the second highest.
\ours under its low adaptation effort  outperforms static memory baselines, demonstrating strong generalization on unseen tasks and across different model families.}
\label{tab:main_results}

\begin{tabular}{l|cc|ccc}\toprule
\multirow{2}{*}{\makecell[l]{Memory\\Mechanisms}} & \multirow{2}{*}{\makecell[c]{Long\\Term}} & \multirow{2}{*}{\makecell[c]{Short\\Term}} & \multicolumn{2}{c}{ALFWorld} &\multirow{2}{*}{WebShop} \\\cmidrule{4-5}
& & &Seen &Unseen & \\\midrule

\multicolumn{6}{c}{\textbf{\textit{On-Policy} Long-Term Memory}}  \\\midrule
\rowcolor{yellow!30} \multicolumn{6}{l}{\qwen~\textit{Qwen3-4B-Instruct (ALFWorld)}}  \\
\rowcolor{yellow!30} \multicolumn{6}{l}{\qwen~\textit{Qwen2.5-7B-Instruct-RL (WebShop)}}  \\
No Memory & - & -  &45.2\std{1.8} &46.8\std{2.5}&\underline{71.4}\std{1.4} \\
ReasoningBank & Strat. & Strat.  &49.3\std{0.7} &51.2\std{0.9} &68.6\std{2.0} \\
Synapse & Traj. & Traj. &\underline{52.1}\std{1.9} &\underline{52.2}\std{0.7} &65.4\std{0.9} \\
\rowcolor{gray!15}\ours & Traj. & Strat. &\textbf{54.0}\std{2.9} &\textbf{58.2}\std{3.9} &\textbf{74.2}\std{0.3} \\

\midrule
\multicolumn{6}{c}{ \textbf{\textit{Off-Policy} Long-Term Memory}} \\\midrule

\rowcolor{yellow!30} \multicolumn{6}{l}{\google~\textit{Gemma-3-27b-it}} \\
No Memory & - & -  & 36.7\std{0.8}   & 37.6\std{1.9}  & 18.2\std{0.1}  \\
ReasoningBank & Strat. & Strat.  & \underline{45.2}\std{2.2} & \underline{44.8}\std{1.3}  & 18.6\std{0.5}  \\
Synapse & Traj. & Traj. & 35.0\std{2.1}  & 30.6\std{0.7}  & \underline{22.6}\std{0.5}  \\
\rowcolor{gray!15}\ours & Traj. & Strat.   & \textbf{47.4}\std{2.2}  & \textbf{49.5}\std{2.6}  & \textbf{24.7}\std{1.2}  \\

\bottomrule
\end{tabular}
\end{table}

\begin{table}[t]
\centering
\small
\setlength{\tabcolsep}{4pt}
\caption{
Results on HotpotQA under the cross-episode agentic search setup with \textit{Qwen3-4B-Instruct-2507}. We report Success Rate (\%) as mean $\pm$ std over 3 independent runs. \ours nder its low adaptation effort outpe consistently outperforms static memory baselines in this iterative search setting.
}
\label{tab:hotpotqa}
\begin{tabular}{l|cc|c}\toprule
\multirow{2}{*}{\makecell[l]{Memory\\Mechanisms}} & \multirow{2}{*}{\makecell[c]{Long\\Term}} & \multirow{2}{*}{\makecell[c]{Short\\Term}} & HotpotQA \\\cmidrule{4-4}
& & & SR (\%) \\\midrule
No Memory       & -      & -      & 39.7\std{1.9} \\
Synapse         & Traj.  & Traj.  & 40.4\std{0.2} \\
ReasoningBank   & Strat. & Strat. & 40.5\std{0.8} \\
\rowcolor{gray!15}\ours & Traj. & Strat. & \textbf{41.1}\std{0.5} \\
\bottomrule
\end{tabular}
\end{table}

\subsection{Main Results}
\noindent\textbf{\ours outperforms static memory baselines across diverse agentic tasks.}
Table~\ref{tab:main_results} presents the evaluation results under the training-free setup. \ourslow consistently outperforms baselines across all settings, with the most pronounced gains observed in generalization scenarios. On the challenging ALFWorld unseen split, \ourslow achieves a substantial +11.4 point absolute improvement over the ``No Memory'' baseline and surpasses the strongest static memory method (Synapse) by +6.0 points without any training. This significant margin validates our method's ability to retrieve and adapt high-level strategies to novel environments.
\ours also achieves the best performance on HotpotQA (Table~\ref{tab:hotpotqa}), an iterative multi-hop search task evaluated under a cross-episode setup. Across all three tasks, static methods with fixed retrieval timing consistently underperform \ours, confirming that dynamic memory adaptation is effective regardless of task structure.

\noindent\textbf{Dynamic retrieval corrects negative transfer on WebShop.}
A critical insight from Table~\ref{tab:main_results} is the failure of static memory on WebShop, where baselines suffer from negative transfer. Specifically, Synapse and ReasoningBank underperform the memory-free ReAct agent by 6.0 and 2.8 points, respectively. We attribute this deficit to their rigid retrieval timing at episode initiation ($t=0$), where the uninformative homepage state leads to the retrieval of noisy, irrelevant trajectories. \ours reverses this trend: by dynamically triggering updates mid-trajectory, it effectively leverages context-rich states (e.g., search results) to achieve a +2.8 point gain over the memory-free baseline.

\noindent\textbf{Hybrid architecture enables effective memory sharing despite off-policy distribution shifts.}
We demonstrate that our framework accommodates diverse backend models and allows them to share a unified long-term memory. In our \textit{off-policy} evaluation (Table~\ref{tab:main_results}), Gemma-3-27b-it performs inference using trajectory memory banks originally constructed by Qwen models. Despite the distribution shift between the behaviors of the generator and the policy, \ourslow delivers robust improvements. It boosts WebShop performance by +6.5 points over the ``No Memory'' baseline and outperforms Synapse by +2.1 points. This confirms the flexibility of our hybrid memory architecture: by decoupling storage (trajectories) from adaptation (strategy synthesis), \ours ensures that the strategy is synthesized \textit{by the current policy}, tailored to its own capabilities. This avoids the limitation of baselines like ReasoningBank, where strategies are pre-generated offline by the source model, forcing the inference agent to follow reasoning traces that may be incompatible with its own.

\begin{figure}[t]
  \begin{center}
    \centerline{\includegraphics[width=0.85\columnwidth]{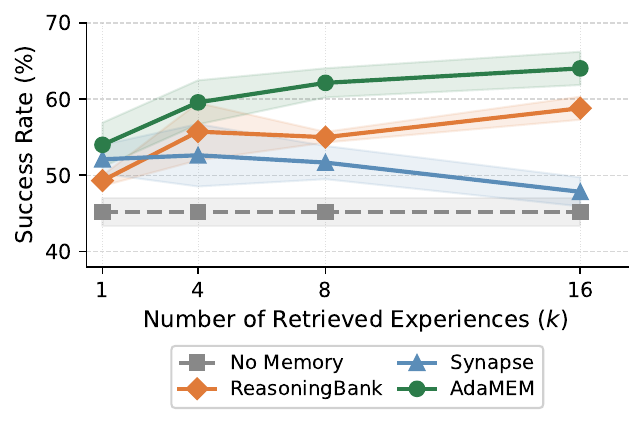}}
    \caption{Scalability with retrieval budget $k$ on ALFWorld seen split (Qwen3-4B-Instruct, on-policy). Shaded bands show $\pm$std over 3 runs. \ours scales monotonically with $k$, while Synapse degrades as injecting additional raw trajectories into the context leads to context overflow and increased interference from irrelevant content.}
    \label{fig:scalability}
  \end{center}
\end{figure}

\noindent\textbf{\ours scales consistently with more retrieved experiences.}
Figure~\ref{fig:scalability} reports ALFWorld (seen) success rate as the retrieval budget $k$ increases from 1 to 16. \ours benefits monotonically from additional retrieved experiences, improving from 54.0\% at $k{=}1$ to 64.0\% at $k{=}16$ (+10 points). ReasoningBank also improves with $k$ but plateaus and gains less (+9.5 points), while Synapse \textit{degrades} at larger $k$ (52.1\% $\to$ 47.8\%). We attribute this to the verbosity of raw trajectory retrieval: as $k$ increases, concatenating multiple full trajectories substantially expands the context, overwhelming the agent with redundant and irrelevant content. In contrast, \ours compresses retrieved experiences into concise strategies, avoiding context overflow and enabling consistent gains with scale. This confirms that strategy abstraction is the key ingredient for effectively leveraging a larger memory bank.

\begin{figure}[t]
  \vskip 0.2in
  \begin{center}
    \centerline{\includegraphics[width=\columnwidth]{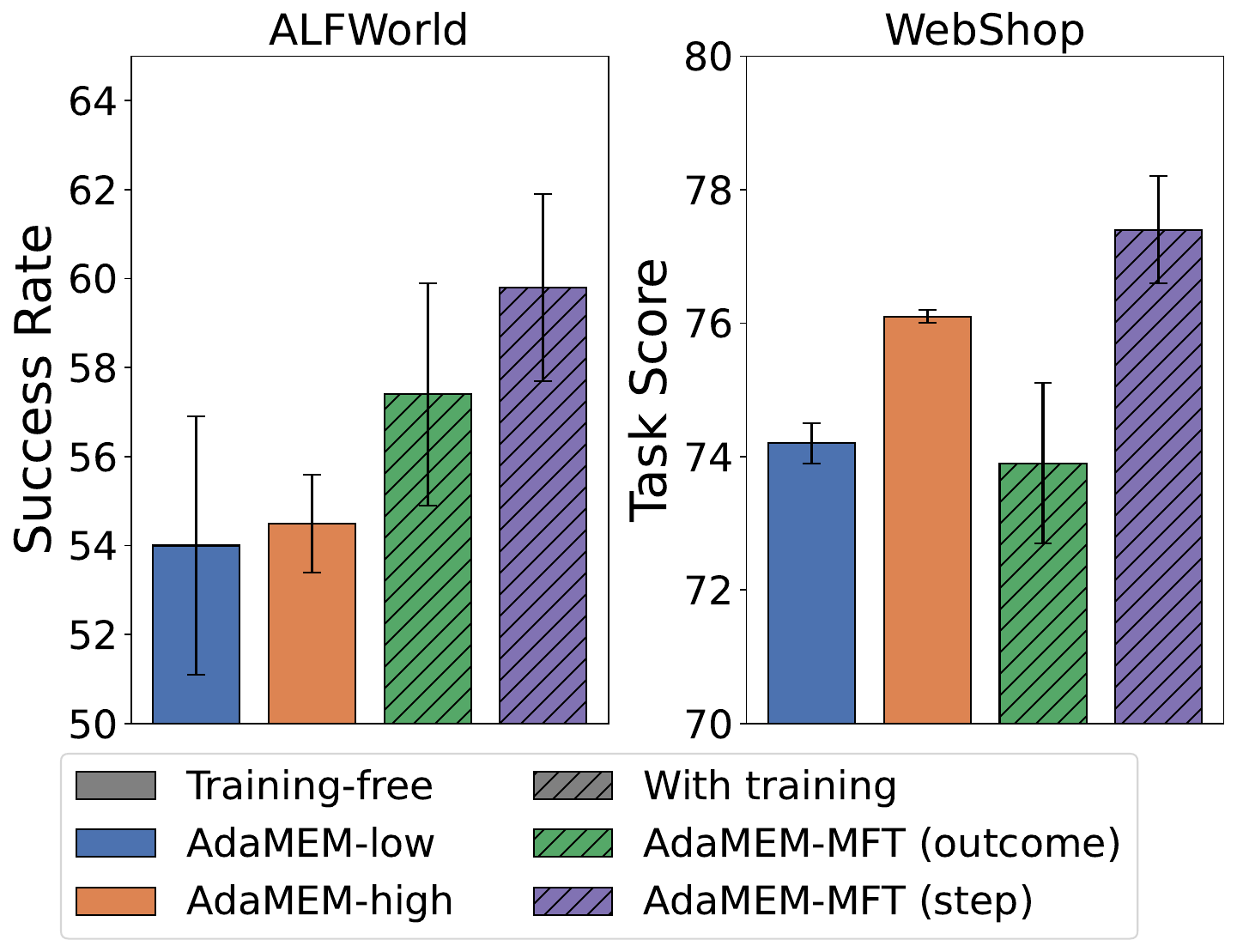}}
\caption{Impact of Memory Fine-Tuning (MFT). Following the \textit{on-policy long-term memory} setup in Table~\ref{tab:main_results}, we use Qwen3-4B-Instruct for ALFWorld seen split and Qwen2.5-7B-Instruct-RL for WebShop.  We compare training-free baselines against MFT variants trained with \textit{Outcome}-only and \textit{Outcome} + \textit{Action Change} (step) dual-filtering. \trainingmethod consistently outperforms outcome-based MFT and training-free baselines.}
    \label{fig:step_vs_outcome}
  \end{center}
\end{figure}

\begin{figure}[t]
  \vskip 0.2in
  \begin{center}
    \centerline{\includegraphics[width=\linewidth]{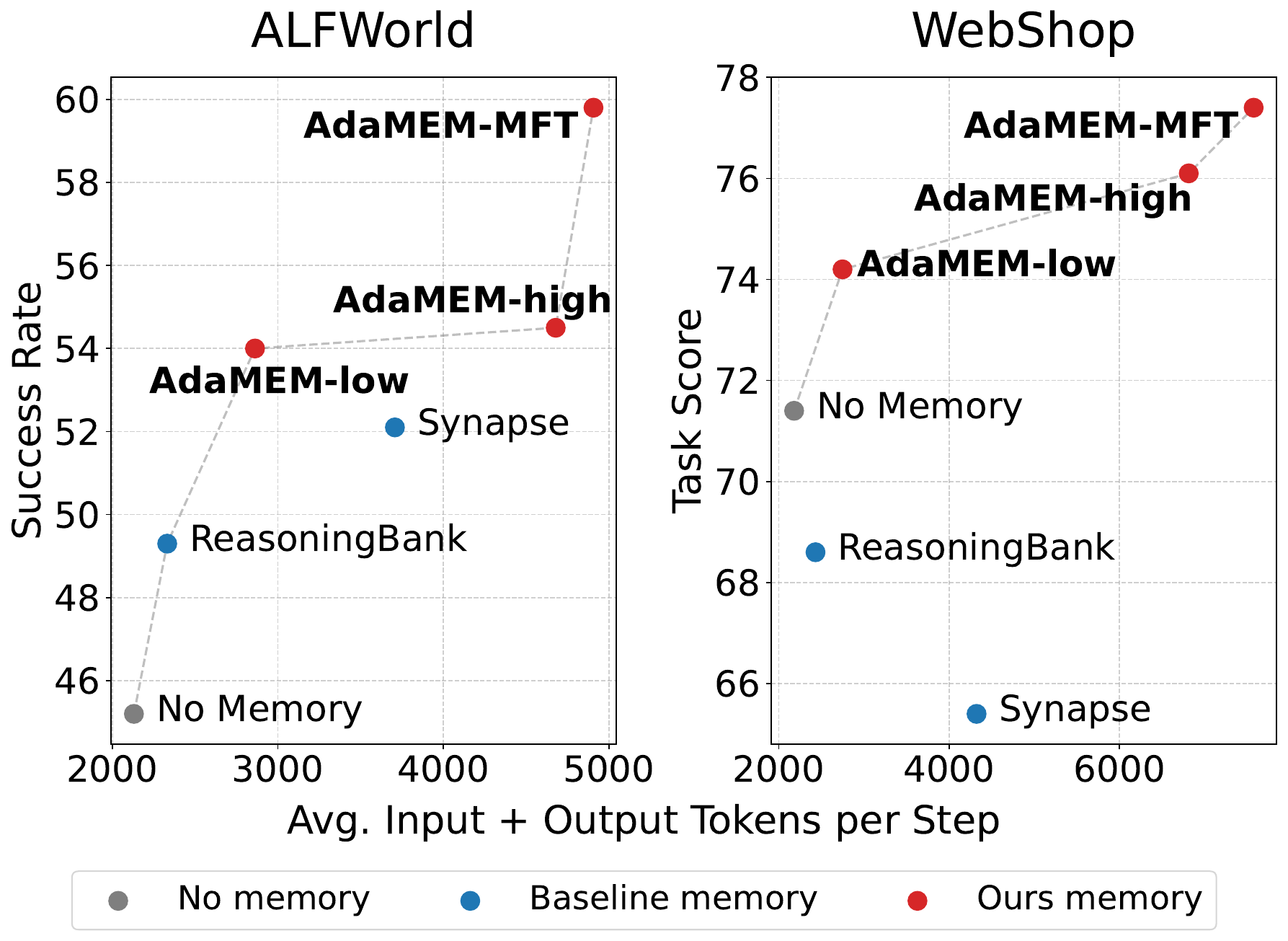}}
\caption{Effectiveness vs. Efficiency trade-off. Following the \textit{on-policy long-term memory} setup in Table~\ref{tab:main_results}, we use Qwen3-4B-Instruct for ALFWorld seen split and Qwen2.5-7B-Instruct-RL for WebShop. We plot task performance (y-axis) against the average inference cost in tokens per step (x-axis). \ours establishes a superior Pareto frontier compared to static memory baselines.}
    \label{fig:pareto}
  \end{center}
\end{figure}

\noindent\textbf{Step-wise filtering is critical for learning effective strategy memory.}

While training-free variants perform well, Figure~\ref{fig:step_vs_outcome} demonstrates that training for memory generation enables better test-time adaptation. We compare two filtering strategies for our \trainingmethod framework: filtering by \textbf{Outcome Success} (keeping strategies from all successful trajectories) versus additionally filtering by \textbf{Action Change} (keeping only strategies that causally altered the agent's action). To ensure fair comparisons, we keep the fine-tuning examples for both methods the same. The results reveal the dangers of naive outcome-based supervision: on WebShop, filtering by Outcome Success actually degrades performance compared to the training-free \ourshigh baseline (73.9 vs. 76.1). In contrast, the \trainingmethod consistently drives monotonic improvements, achieving a clear +1.3 point gain over \ourshigh. The advantage is even more pronounced on ALFWorld, where step-wise filtering yields a substantial +5.3 point improvement over the training-free \ourshigh baseline (59.8 vs. 54.5), confirming that training with fine-grained process-level signals is essential. The higher token consumption of \oursft relative to \ourshigh is a direct consequence of this training: by learning exclusively on instances where dynamic synthesis was decisive, \oursft develops a stronger propensity to trigger memory generation, increasing the strategy update frequency from 34\% to 36\% on ALFWorld and from 63\% to 74\% on WebShop (see Appendix~\ref{app:efficiency}).

\noindent\textbf{\ours effectively leverages additional test-time compute.}
Figure~\ref{fig:pareto} illustrates the relationship between task performance and inference cost. \ours demonstrates a robust positive scaling trend with respect to test-time compute: as we increase the inference compute (by increasing memory update frequency from \ourslow to \ourshigh and \oursft), performance improves monotonically. This contrasts sharply with the negative scaling of baselines like Synapse on WebShop, which consume significantly more tokens than ReAct but yield \textit{lower} task scores. By effectively converting additional test-time compute into improved decision-making, our framework avoids the stagnation characteristic of rigid retrieval heuristics, allowing for continuous improvement proportional to the available computational budget.
This efficiency extends to latency as well. In Table~\ref{tab:latency} and Appendix~\ref{app:efficiency}, we demonstrate that \ours reduces the inference latency by 16\% compared to Synapse, as synthesizing concise strategies reduces the need for processing lengthy raw trajectory logs.

\subsection{Ablation Study}
\label{sec:ablation}

To understand the specific contribution of each component in the \textsc{AdaMEM} framework, we conduct an ablation study on ALFWorld (Table~\ref{tab:ablation}). To eliminate the variance introduced by the agent's autonomous decision on \textit{when} to refresh, we enforce a strategy refresh at every step for this experiment, resulting in a special variant \textsc{AdaMEM-max}. This ensures that at every time step $t$, the agent queries the memory, synthesizes a strategy, and decides the next action.

\noindent\textbf{Necessity of Abstraction with Short-Term Memory.}
To investigate whether the explicit \textit{abstraction} of a strategy is necessary, or if the agent can simply imitate raw retrieved examples, we evaluate a variant \textit{w/o short-term strategy  memory}. In this setting, the agent retrieves raw experiences based on the current state at every step but bypasses strategy synthesis, feeding the retrieved logs directly to the policy.
While this variant reduces inference cost (6.0K $\to$ 3.8K tokens/step) by eliminating the intermediate generation step of synthesizing $z_t$, the performance drops from 65.5\% to 59.3\%. This confirms that simply retrieving raw trajectories is insufficient; the agent requires the Short-Term Memory to \textit{abstract} diverse experiences into actionable, state-specific guidance, a benefit that outweighs the additional generation cost.
\begin{table}[t]
\centering
\small

\caption{Ablation of memory components on ALFWorld seen split with Qwen3-4B-Instruct, using on-policy long-term memory. Results confirm that both long-term memory (retrieving past trajectory) and short-term memory (abstracting strategy) are essential to maximize success.}
\label{tab:ablation}
\begin{tabular}{lcc}\toprule
Systems &\makecell[c]{Success\\Rate ($\uparrow$)} &\makecell[c]{\# Inference\\Tokens Per\\Step ($\downarrow$)} \\\cmidrule{1-3}
\rowcolor{gray!15}\textsc{AdaMEM-max} &\textbf{65.5}\std{3.2} &6.0K \\
\quad w/o strategy memory &59.3\std{5.0} &3.8K \\
\quad w/o trajectory memory &34.8\std{2.7} &3.9K \\
\quad w/o both (no memory) & 45.2\std{1.8} & 2.1K
\\\bottomrule
\end{tabular}
\end{table}

\noindent\textbf{Necessity of Grounding with Long-Term Memory.}
To investigate if the agent can effectively strategize using only its internal knowledge without external \textit{grounding}, we evaluate a variant \textit{w/o long-term trajectory memory}. Here, the agent attempts to synthesize a strategy solely based on the current task state. This variant sees a reduction in token usage (6.0K $\to$ 3.9K tokens/step) because the agent's context window is not burdened by lengthy retrieved trajectories ($\mathcal{E}_{\text{ret}}$).
However, this efficiency comes at a severe cost: performance drops to 34.8\%, significantly worse than even the memory-free ReAct baseline (45.2\%). Manual observation in Appendix~\ref{app:strategy_hallucination} reveals that without the \textit{grounding} provided by retrieved successful trajectories, the agent's self-generated strategies often degenerate into hallucinations based on misunderstandings of environmental feedback.

\subsection{Case Study}
\label{sec:case_study}
Figure~\ref{fig:case_study} (Appendix~\ref{app:case_study}) illustrates \ours overcoming distribution shifts in ALFWorld. When an initial strategy incorrectly directs the agent to cabinets (while soapbars are on the countertop), static agents often trap themselves in empty containers. In contrast, \ours detects the lack of progress, explicitly reasons that the \textit{``strategy is outdated,''} and triggers a refresh to correctly redirect search. This confirms that step-wise updates enable self-correction against stale priors. However, Figure~\ref{fig:failure_case_study} (Appendix~\ref{app:failure_case}) reveals a limitation where the agent fails to trigger a necessary refresh, incorrectly rationalizing the failure as merely incomplete exploration rather than a flawed plan.

\section{Conclusion}
We propose \ours, a hybrid memory framework that addresses the rigidity of static retrieval in agent memory. By decoupling long-term trajectory storage from short-term strategy synthesis, \ours enables agents to adapt their reasoning to evolving task states without online parameter updates. We further introduce \trainingmethod, a rejection-sampling fine-tuning technique that trains agents to generate high-utility strategies based on fine-grained, process-level impacts on decision-making. Empirically, our approach establishes a superior Pareto frontier between inference efficiency and task performance, with consistent gains across three agentic benchmarks compared to static memory baselines. This work demonstrates that dynamic memory curation serves as a critical scaling dimension for agentic reasoning in open-ended environments.

\section*{Acknowledgments}
This work was supported by Cisco Research, and computational resources and services provided by Advanced Research Computing (ARC), a division of Information and Technology Services (ITS) at the University of Michigan, Ann Arbor.
This work used the Bridges-2 system at the Pittsburgh Supercomputing Center (PSC).
We also thank the ICML
reviewers and the members of the LAUNCH group at the University of Michigan for their constructive feedback.

\section*{Impact Statement}

This paper presents \ours, a framework for test-time adaptive memory in language agents. By enabling agents to continuously refine their behavior from accumulated experience without parameter updates, this work contributes to more capable and efficient AI systems for long-horizon tasks such as embodied navigation, web interaction, and information retrieval.

The primary societal benefits lie in reducing the cost of deploying adaptive agents: practitioners can improve agent performance post-deployment without expensive retraining, lowering barriers to building capable AI assistants. The framework's non-parametric nature also makes it more transparent, as the long-term memory consists of interpretable natural language trajectories and strategies.

At the same time, agents that learn from and act upon accumulated experience raise important considerations. Memory banks populated from user interactions may inadvertently retain sensitive information, requiring careful data governance. Additionally, more capable autonomous agents operating in open-ended environments (e.g., web navigation) warrant robust oversight mechanisms to prevent unintended actions. We encourage practitioners deploying systems built on this framework to consider appropriate access controls, memory auditing, and human-in-the-loop safeguards proportional to the deployment context.

\bibliography{example_paper}
\bibliographystyle{icml2026}

\newpage
\appendix

\section{Implementation Details}\label{app:hyperparameters}
\paragraph{Inference Configurations.}
Algorithm~\ref{alg:inference} formally defines the inference loop for both \ourshigh and \ourslow modes.
We build a semantic index with \texttt{hnswlib}~\citep{malkov2018efficient}, using the full \texttt{Qwen3-Embedding-4B} output as state embeddings (dimension 2560). The HNSW index is built in cosine space over L2-normalized embeddings with $M{=}32$, $\texttt{ef\_construction}{=}200$, and $\texttt{ef\_search}{=}100$.
During inference, we use a decoding temperature of 0.7 and maximum generation tokens of 2048 for all models.
We use the ALFWorld, WebShop, and HotpotQA environments provided by \texttt{verl-agent}~\citep{DBLP:journals/corr/abs-2505-10978}.

\paragraph{Training Configurations.}
We implement \trainingmethod using LLaMA-Factory~\citep{llamafactory}, fine-tuning the backbone via LoRA~\citep{DBLP:conf/iclr/HuSWALWWC22} with a rank of 32. Optimization employs a global batch size of 64, a learning rate of  with a cosine scheduler (0.1 warmup ratio), and proceeds for 1 epoch. All experiments utilize NVIDIA A40/L40S GPUs (48GB).

\begin{algorithm}[t]
\caption{Inference Process for \ours}
\label{alg:inference}
\begin{algorithmic}[1]
\STATE \textbf{Input:} Initial observation $o_0$, Long-term Memory $\mathcal{M}$, Policy $\pi_\theta$, Mode $\in \{\text{\ourshigh}, \text{\ourslow}\}$
\STATE \textbf{Initialize:} History $h_0 \leftarrow \emptyset$, Persistent Strategy $z_{\text{curr}} \leftarrow \emptyset$ (for \ourslow)
\FOR{$t = 0, 1, \dots, T$}
    \STATE $s_t \leftarrow (h_t, o_t)$
    \IF{Mode == \textbf{\ourshigh}}
        \STATE Sample tentative action and decision: $(a', d_{\text{mem}}) \sim \pi_\theta(s_t)$
        \IF{$d_{\text{mem}} = \text{Yes}$}
            \STATE Retrieve context: $\mathcal{E}_{\text{ret}} \leftarrow \text{Retrieve}(s_t, \mathcal{M})$
            \STATE Consolidate strategy: $z_t \sim \pi_\theta(z \mid s_t, \mathcal{E}_{\text{ret}})$
            \STATE Generate adapted action: $a_t \sim \pi_\theta(a \mid s_t, z_t)$
        \ELSE
            \STATE $a_t \leftarrow a'$
        \ENDIF
    \ELSIF{Mode == \textbf{\ourslow}}
        \STATE Sample action and refresh decision: $(a_{\text{init}}, d_{\text{refresh}}) \sim \pi_\theta(s_t, z_{\text{curr}})$
        \IF{$d_{\text{refresh}} = \text{Yes}$}
             \STATE Retrieve context: $\mathcal{E}_{\text{ret}} \leftarrow \text{Retrieve}(s_t, \mathcal{M})$
             \STATE Update persistent strategy: $z_{\text{curr}} \sim \pi_\theta(z \mid s_t, \mathcal{E}_{\text{ret}})$
             \STATE Re-generate action: $a_t \sim \pi_\theta(a \mid s_t, z_{\text{curr}})$
        \ELSE
             \STATE $a_t \leftarrow a_{\text{init}}$
        \ENDIF
    \ENDIF
    \STATE Execute $a_t$, receive observation $o_{t+1}$
    \STATE $h_{t+1} \leftarrow h_t \cup \{(a_t, o_{t+1})\}$
\ENDFOR
\end{algorithmic}
\end{algorithm}

\section{Agent Prompts}
\label{app:prompt}

\subsection{Baseline Agents}

\begin{promptbox}[title=Prompt for the ReAct Agent (No Memory)]
You are an expert agent operating in the ALFRED Embodied / WebShop e‑commerce Environment. Your task is to: \{task\_description\}

Prior to this step, you have already taken \{step\_count\} step(s). Below are the most recent \{history\_length\} observations and the corresponding actions you took: \{action\_history\}

You are now at step \{current\_step\} and your current observation is: \{current\_observation\}

Your admissible actions of the current situation are: [\{admissible\_actions\}].

Now it's your turn to take an action.

You should first reason step-by-step about the current situation. This reasoning process MUST be enclosed within \textless think\textgreater \textless /think\textgreater tags.

Once you've finished your reasoning, you should choose an admissible action for current step and present it within \textless action\textgreater \textless /action\textgreater tags.
\end{promptbox}

\begin{promptbox}[title=Prompt for the ReasoningBank Agent]
\{react\_prompt\}

Below are some strategy items that are accumulated from past interactions from the environment that may be helpful to solve the task. You can use it when you feel it's relevant. In each step, please first explicitly discuss if you want to use each strategy item or not, and then take action.

\{retrieved\_strategies\}
\end{promptbox}

\begin{promptbox}[title=Prompt for the Synapse Agent]
\{react\_prompt\}

Here are some task solving trajectories on similar tasks:

\{retrieved\_experiences\}
\end{promptbox}

\subsection{\ours Agent}

\begin{promptbox}[title=Prompt for the \ourshigh Agent: Memory Retrieval Decision]
\{react\_prompt\}

MEMORY RETRIEVAL OPTION:

You have access to a memory system that stores past experiences from similar situations. Using memory can often clarify ambiguity and suggest effective strategies.

After presenting your reasoning and initial proposed action, consider whether retrieving related past experiences could improve your decision. Retrieval may return both successful and failed examples—successful ones illustrate proven approaches, while failed ones highlight common pitfalls to avoid. Both are valuable for informed decision-making.

We recommend retrieving memory in ambiguous or uncertain situations, such as when:

- Multiple possible actions exist without a clear best choice.

- You are unsure about the next step (your reasoning includes words like “maybe,” “uncertain,” or “not sure”).

- The task involves a sequence of actions where past examples could serve as a concrete template.

Your output format should be:

\textless think\textgreater...your reasoning...\textless /think\textgreater

\textless action\textgreater...your chosen action...\textless /action\textgreater

\textless memory\_request\_rationale\textgreater Briefly explain whether and why memory would help here (1-2 sentences)\textless /memory\_request\_rationale\textgreater

\textless request\_memory\textgreater yes or no\textless /request\_memory\textgreater
\end{promptbox}

\begin{promptbox}[title=Prompt for the \ourslow Agent: Memory Refresh Decision]
\{react\_prompt\}

Below are some strategy items that are accumulated from past interactions from the environment that may be helpful to solve the task. You can use it when you feel it's relevant. In each step, please first explicitly discuss if you want to use each strategy item or not, and then take action.

\{current\_strategy\}

-----

\#\#\# STRATEGY REFRESH OPTION

If you believe the current strategy is outdated or misaligned, you may request to synthesize a new strategy using your memory system.

**Implication of Refreshing:**

If you trigger a refresh (`yes`), your currently proposed action will **not** be executed. Instead, the system will update the strategy, and you will be given the opportunity to **make a new decision for this exact step** using the updated context.

**Recommended Criteria for Refreshing:**

  * **Observation Mismatch:** The current state/observation contradicts the expectations set by your existing strategy.

  * **Persistent Failure:** You have attempted similar actions repeatedly without making progress toward the goal.

  * **Critical Ambiguity:** The current situation presents a novel obstacle or edge case where you lack the context to proceed safely.

**Instructions:**

1.  Generate a **proposed action** based on the *current* strategy.

2.  Decide if a refresh is strictly necessary based on the criteria above.

**Output format:**

\textless think\textgreater...reasoning...\textless /think\textgreater

\textless action\textgreater...your proposed action...\textless /action\textgreater

\textless refresh\_decision\textgreater yes or no\textless /refresh\_decision\textgreater

\end{promptbox}

\begin{promptbox}[title=Prompt for the \ours Agent: Strategy Memory Synthesis]
You need to synthesize contextualized, non-generic strategies from retrieved past experiences. You are given K = \{k\} retrieved experiences, each containing the past state, past action, remaining trajectory, and final outcome (“success” or “failure”).

Goal: Derive a concise, actionable, and non-generic strategy tailored to the current state by learning from the retrieved experiences. This strategy will later guide the next action.

Constraints:

- Be specific but not overfitted — avoid instance-specific shortcuts unless clearly relevant to the current state.

- If the evidence from retrieved examples is weak or conflicting, briefly note that uncertainty within the strategy.

- Do not quote or copy from retrieved experiences; abstract their key insights.

Retrieved Experiences:

\{retrieved\_experiences\}

First, provide your reasoning enclosed in \textless think\textgreater...\textless /think\textgreater. Analyze how the current state resembles or differs from each past state, and explain why successful examples likely worked and why failed ones did not.

Then, within a single \textless strategy\textgreater block, list 1–3 concise, actionable, context-specific strategy bullets tailored to the current state. Do not produce multiple \textless strategy\textgreater tags. Do not generate the next action itself.

Output format:

\textless think\textgreater...your reasoning...\textless /think\textgreater

\textless strategy\textgreater

- ...

- ...

- ...

\textless /strategy\textgreater
\end{promptbox}

\begin{promptbox}[title=Prompt for the \ours Agent:  Strategy-Conditioned Action Generation]
\{react\_prompt\}

Below are some strategy items that are accumulated from past interactions from the environment that may be helpful to solve the task. You can use it when you feel it's relevant. In each step, please first explicitly discuss if you want to use each strategy item or not, and then take action.

\{strategy\}
\end{promptbox}

\section{Case Study}

\subsection{Strategy Hallucination}
\label{app:strategy_hallucination}

To investigate the performance degradation observed in the \textit{w/o long-term memory} variant (Table~\ref{tab:ablation}), we present a qualitative analysis of a failure trajectory that illustrates how the absence of external grounding leads the agent to hallucinate inadmissible actions.

For instance, in a task to \textit{``put a knife in sidetable,''} the correct action is to use the verb \texttt{move}. The agent had previously attempted to \texttt{move knife 1 to sidetable 1} multiple times but failed (receiving \textit{"Nothing happens"}) because it had not yet navigated to the correct location. Now located at the sidetable, the agent reflects on this history without the benefit of external examples. It incorrectly infers that the verb \texttt{move} itself is invalid due to the repeated failures. Consequently, it hallucinates that the correct verb must be \texttt{put} and generates a strategy explicitly instructing: \textit{``- put knife 1 on sidetable 1.''} Since \texttt{put} is not an admissible action key in this environment, this reasoning actively misleads the agent, whereas a simple ReAct agent might have successfully attempted \texttt{move} again.

\subsection{Success Case with Dynamic Adaptation}
\label{app:case_study}

To demonstrate the efficacy of \ours's adaptive mechanism, we analyze a specific instance from the ALFWorld benchmark shown in Figure~\ref{fig:case_study} (Task: \textit{``put two soapbar in garbagecan''}). This case illustrates how the agent recovers from both information misalignment and physical constraints through step-wise strategy updates.

\textbf{Phase 1: Initial Misalignment and Correction.}
At the beginning of the episode, the agent retrieves an initial strategy based on the task description, which suggests systematically checking cabinets. The agent executes this plan, opening Cabinets 1 through 4. However, this initial strategy relies on a distributional assumption that soapbars are stored in cabinets. Upon finding all cabinets empty, a static memory agent (like Synapse or ReasoningBank) would likely hallucinate further cabinets or loop repetitively. In contrast, \ours recognizes the lack of progress. The agent's thought process notes, \textit{``the existing strategy is outdated because it assumes all soapbars are in cabinets,''} triggering a refresh decision ($d_{\text{refresh}}=\text{yes}$).

\textbf{Phase 2: Strategy Shift.}
The agent queries the Long-Term Trajectory Memory with the updated state (cabinets empty) and synthesizes a new strategy: \textit{``Go to countertop 1... If not found, go to sinkbasin.''} This directs the agent to the correct location immediately, where it successfully finds and picks up the first soapbar.

\textbf{Phase 3: Handling Environmental Constraints.}
A critical adaptation occurs when the agent attempts to pick up the second soapbar immediately after the first. The environment renders the ``take'' action inadmissible (likely due to an inventory limit of one item). A standard agent often fails here, repeatedly trying to ``take'' the object despite the error. \ours, however, detects that the action is not in the admissible list. It triggers a second strategy refresh, explicitly diagnosing the issue: \textit{``The environment does not allow taking soapbar 2... a refresh is necessary.''} The newly synthesized strategy corrects the workflow to a sequential execution: \textit{``move soapbar 1 to [garbagecan]... return to countertop to retrieve the second soapbar.''}

This trajectory demonstrates that \ours does not merely follow a script; it actively monitors the validity of its strategy against environmental feedback, pivoting effectively when the current plan becomes obsolete or impossible.

\subsection{Failure Case with Strategy Inertia}
\label{app:failure_case}

To identify limitations in autonomous adaptation, we analyze a specific failure instance of \ourslow on the ALFWorld task \textit{``put a clean butterknife in countertop,''} as illustrated in Figure~\ref{fig:failure_case_study}.

The episode begins with a reasonable strategy: systematically exploring cabinets and drawers. However, a critical failure mode emerges when this initial plan yields no results. After searching all primary containers (Cabinets 1--17, Drawers 1--7), the agent encounters a state of high uncertainty. Theoretically, this is the optimal moment to trigger a strategy refresh ($d_{\text{refresh}}=\text{yes}$) to query the long-term memory for alternative object locations (e.g., searching exposed surfaces like tables or shelves).

Instead, the agent exhibits \textit{strategy inertia}. As detailed in the thought trace, the agent rationalizes the lack of progress as incomplete exploration rather than a misalignment of the strategy itself, explicitly concluding: \textit{``I do not believe a strategy refresh is necessary... failure to find the butterknife so far is due to incomplete exploration.''} Consequently, it outputs $d_{\text{refresh}}=\text{no}$ and persists with the obsolete guidance, spiraling into a loop of searching low-probability locations (e.g., the fridge and sinkbasin).

This case demonstrates that while \ourslow improves efficiency, it relies heavily on the agent's self-evaluation capabilities. When the model acts with overconfidence in a failing plan, it effectively locks itself out of the corrective guidance stored in the long-term memory, leading to episode failure via step exhaustion.

\begin{table}[t]
\centering\small
\setlength{\tabcolsep}{5pt}
\caption{Comparison of inference latency on ALFWorld using Qwen3-4B-Instruct. Relative changes compared to the ``No Memory'' baseline are highlighted in color. \ours achieves the best success rate with a moderate latency increase, effectively outperforming Synapse which suffers from a severe 50\% latency penalty.}
\label{tab:latency}
\begin{tabular}{llll}
\hline
              & \makecell[l]{Memory\\Adaptation\\(seconds, $\downarrow$)}  & \makecell[l]{Total Time\\Per Step\\(seconds, $\downarrow$)}       & \makecell[l]{Success\\Rate (\%, $\uparrow$)} \\ \hline
No Memory     & -                              & 28.5 & 45.2         \\
ReasoningBank & -                             & 29.8\mydeltared{5\%} & 49.3\mydeltagreen{9\%}          \\
Synapse       & -                            & 42.7\mydeltared{50\%}   & 52.1\mydeltagreen{15\%}      \\
\rowcolor{gray!15}\ours        &   5.7                         & 36.8\mydeltared{29\%}   & 54.0\mydeltagreen{19\%}        \\ \hline
\end{tabular}
\end{table}

\section{Efficiency Analysis}
\label{app:efficiency}

We investigate the computational cost of test-time adaptation by measuring the average wall-clock time per step on ALFWorld. As shown in Table~\ref{tab:latency}, \ours achieves a superior trade-off between latency and performance compared to static memory baselines.

While the memory adaptation mechanism introduces an overhead of 5.7 seconds per step, the total inference time for \ours (36.8s) remains significantly lower than that of Synapse (42.7s). This 16\% reduction in latency compared to Synapse is primarily due to our strategy synthesis: whereas Synapse forces the agent to process long, token-heavy raw trajectories at every step, \ours condenses this information into concise textual guidance.

Although \ours increases total time by 29\% relative to the memory-free baseline, this investment yields the highest return, boosting the success rate by 19\% relative. In contrast, ReasoningBank offers lower latency but fails to achieve comparable success gains, while Synapse incurs a heavy 50\% latency penalty for diminishing returns. These results confirm that \ours effectively converts computational time into actionable reasoning, establishing a more efficient Pareto frontier for agentic memory.

\noindent\textbf{Token budget per task.} Table~\ref{tab:tokens} reports the average total tokens consumed per task across \ours variants. \ourslow consumes 23\% fewer tokens than Synapse (90K vs.\ 117K) while achieving a higher success rate, confirming that strategy synthesis is a more token-efficient way to leverage memory than raw trajectory injection. As adaptation effort increases from \ourslow to \ourshigh and \oursft, the token budget grows correspondingly, but so does task performance --- reflecting a smooth and controllable compute-quality trade-off.

\begin{table}[t]
\centering\small
\setlength{\tabcolsep}{5pt}
\caption{Average total tokens consumed per task on ALFWorld (Qwen3-4B-Instruct, on-policy). \ourslow achieves the best token efficiency among memory-augmented methods.}
\label{tab:tokens}
\begin{tabular}{lcc}
\hline
 & \makecell[c]{Total Tokens\\Per Task ($\downarrow$)} & \makecell[c]{Success\\Rate (\%, $\uparrow$)} \\ \hline
No Memory     & 77K                          & 45.2 \\
ReasoningBank & 81K\mydeltared{5\%}          & 49.3\mydeltagreen{9\%} \\
Synapse       & 117K\mydeltared{52\%}        & 52.1\mydeltagreen{15\%} \\
\rowcolor{gray!15}\ourslow  & 90K\mydeltared{17\%}   & 54.0\mydeltagreen{19\%} \\
\rowcolor{gray!15}\ourshigh & 150K\mydeltared{95\%}  & 54.5\mydeltagreen{21\%} \\
\rowcolor{gray!15}\oursft   & 165K\mydeltared{114\%} & 65.5\mydeltagreen{45\%} \\ \hline
\end{tabular}
\end{table}

\noindent\textbf{Strategy update frequency.} Table~\ref{tab:update_freq} reports how often each \ours variant triggers a strategy update per task. \ourslow updates sparingly (avg.\ 2.9 times, 11.8\% of steps), reserving synthesis for critical decision points. \ourshigh updates more aggressively (33.9\%), while \oursft --- trained exclusively on instances where dynamic synthesis was decisive --- learns a stronger propensity to trigger updates, increasing frequency from 34\% to 36\% on ALFWorld and from 63\% to 74\% on WebShop. This behavioral shift explains the higher token consumption of \oursft relative to \ourshigh.

\begin{table}[t]
\centering\small
\setlength{\tabcolsep}{5pt}
\caption{Average strategy update frequency per task on ALFWorld (Qwen3-4B-Instruct, on-policy).}
\label{tab:update_freq}
\begin{tabular}{lcc}
\toprule
 & \makecell[c]{Avg.\ Updates\\Per Task} & \makecell[c]{Update\\Frequency (\%)} \\ \midrule
\ourslow  & 2.9  & 11.8\% \\
\ourshigh & 12.0 & 33.9\% \\

\bottomrule
\end{tabular}
\end{table}

\begin{figure}
  \vskip 0.2in
  \begin{center}
    \centerline{\includegraphics[width=\columnwidth]{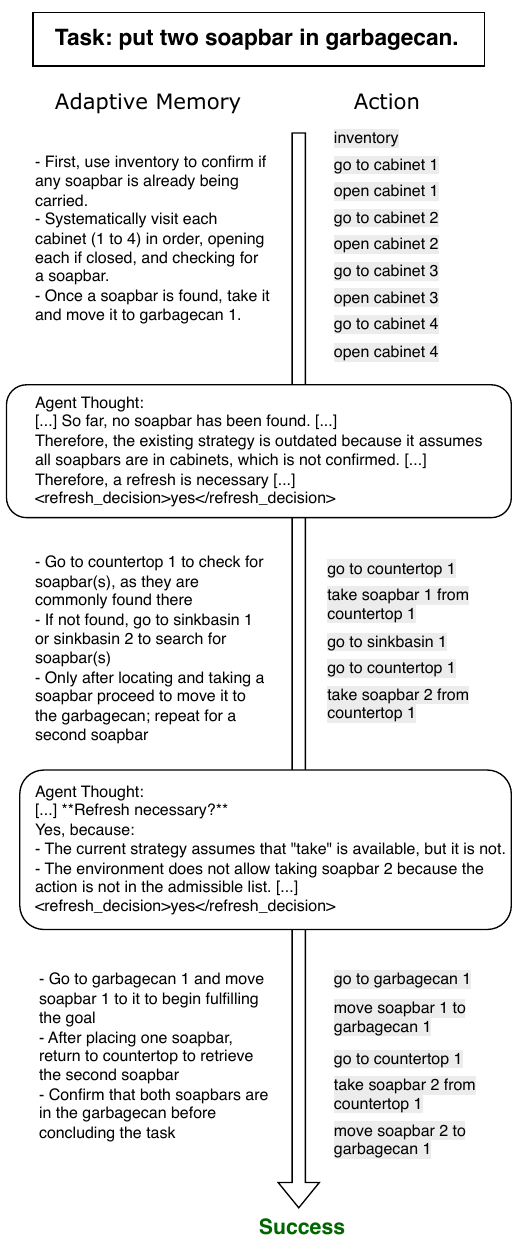}}
    \caption{Example of a successful trajectory produced by \ourslow agent. For this ALFWorld task, the agent recovers from both information misalignment and physical constraints through step-wise strategy memory updates.}
    \label{fig:case_study}
  \end{center}
\end{figure}

\begin{figure}
  \vskip 0.2in
  \begin{center}
    \centerline{\includegraphics[width=\columnwidth]{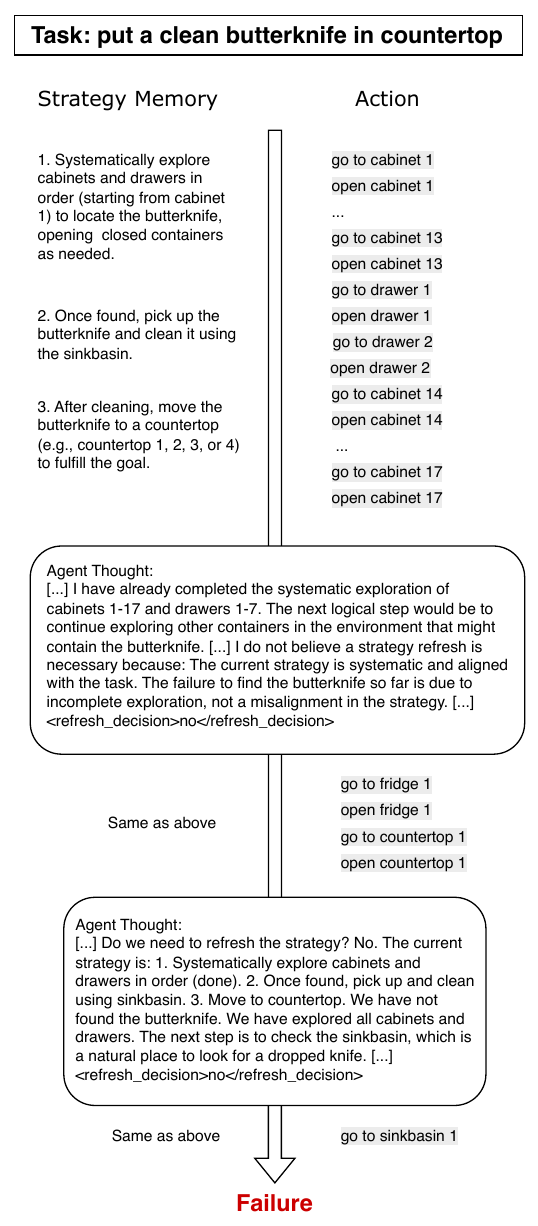}}
\caption{Failure case study on ALFWorld. The agent attempts the task ``put a clean butterknife in countertop''. After an exhaustive but unsuccessful search of all cabinets and drawers, the agent suffers from \textit{strategy inertia}: it incorrectly assesses that the current strategy remains valid and that the failure is merely due to incomplete exploration. Consequently, it declines to refresh the strategy ($d_{\text{refresh}}=\text{no}$), failing to retrieve necessary external guidance and ultimately failing the task.}
\label{fig:failure_case_study}
  \end{center}
\end{figure}

\section{Limitations}
\label{app:limitations}

\noindent\textbf{Strategy Inertia.}
\ourslow relies on prompt-based self-evaluation to decide when to refresh its strategy. This can lead to \textit{strategy inertia}, where the agent rationalizes a failing plan as merely incomplete exploration rather than recognizing that a fundamentally different strategy is required. In such cases, the agent declines to trigger a refresh ($d_{\text{refresh}} = \text{no}$) and continues to follow an obsolete strategy, ultimately exhausting the step budget without solving the task. A concrete example is analyzed in Appendix~\ref{app:failure_case}. Mitigating this failure mode is a promising direction for future work, e.g., by augmenting \trainingmethod with negative examples where stale strategies caused failures, training the model to better detect plan-environment misalignment.

\noindent\textbf{Successful-Trajectory-Only Memory.}
The long-term memory $\mathcal{M}$ is populated exclusively with successful trajectories. Failure trajectories are excluded because in-context learning struggles to derive consistent, actionable insights from them compared to successful demonstrations --- agents tend to over-generalize failure patterns or extract conflicting guidance. Developing principled methods for leveraging failure trajectories in the memory bank remains an open challenge.

\end{document}